\documentclass[sigconf]{acmart}

\usepackage{algorithm}
\usepackage{algpseudocode}
\usepackage{xspace}
\usepackage{amsmath}
\usepackage{subcaption}
\usepackage{multirow}

\newcommand{\name}{RapidGNN\xspace}

\acmConference[SC2025 (Sustainable Supercomputing Workshop)]{ACM/IEEE Conference}{November 2025}{St. Louis, MO}

\begin{document}

\title{RapidGNN: Energy and Communication-Efficient Distributed Training on Large-Scale Graph Neural Networks}

\author{Arefin Niam}
\email{aniam42@tntech.edu}
\affiliation{%
  \institution{Tennessee Technological University}
  \city{Cookeville}
  \state{Tennessee}
  \country{USA}
}

\author{Tevfik Kosar}
\email{tkosar@buffalo.edu}
\affiliation{%
  \institution{University at Buffalo}
  \city{Buffalo}
  \state{New York}
  \country{USA}
}

\author{M S Q Zulkar Nine}
\email{mnine@tntech.edu}
\affiliation{%
  \institution{Tennessee Technological University}
  \city{Cookeville}
  \state{Tennessee}
  \country{USA}
}

\begin{abstract}
    Graph Neural Networks (GNNs) have become popular across a diverse set of tasks in exploring structural relationships between entities. However, due to the highly connected structure of the datasets, distributed training of GNNs on large-scale graphs poses significant challenges. Traditional sampling-based approaches mitigate the computational loads, yet the communication overhead remains a challenge. This paper presents \name,  a distributed GNN training framework with deterministic sampling-based scheduling to enable efficient cache construction and prefetching of remote features. Evaluation on benchmark graph datasets demonstrates \name's effectiveness across different scales and topologies. \name improves end-to-end training throughput by \textbf{2.46$\times$} to \textbf{3.00$\times$} on average over baseline methods across the benchmark datasets, while cutting remote feature fetches by over \textbf{9.70$\times$} to \textbf{15.39$\times$}. \name further demonstrates near-linear scalability with an increasing number of computing units efficiently. Furthermore, it achieves increased energy efficiency over the baseline methods for both CPU and GPU by \textbf{44\%} and \textbf{32\%}, respectively.
\end{abstract}

\keywords{Distributed training, Graph Neural Networks, communication optimization, pipelining.}

\maketitle

\section{Introduction}
Recently, the application of GNNs has brought breakthrough results in a number of scientific and engineering problems (e.g., molecular property prediction and drug discovery~\cite{gilmer2017neural,xiong2021graph,li2022multiphysical}, protein structure prediction~\cite{jumper2021highly,jha2022prediction,reau2023deeprank}, material science and crystal structure prediction~\cite{reiser2022graph, batzner20223}, brain connectivity analysis in neuroscience~\cite{bessadok2022graph,kan2022fbnetgen}, particle physics~\cite{shlomi2020graph}, cybersecurity~\cite{bilot2023graph}). This is possible due to the inherent capability of GNNs in exploring relationships within the structure of the datasets themselves. However, real-world graph datasets are massive in scale. For example, in 2011, the Facebook friendship graph consisted of 721 million users and 69 billion friendship links~\cite{backstrom2012four}. In the fourth quarter of 2024, Meta reported 3.35 billion Daily Active People (DAP) across all of its platforms~\cite{meta2025q4}, showing tremendous growth (e.g., Trillion edges~\cite{ching2015one}) in links across these entities.

Training GNNs in these large graphs encounters several problems (e.g., energy efficiency, scalability, communication overhead, and workload imbalance). The full batch training of the graph is limited by memory, necessitating mini-batch training with neighbor sampling. While this can reduce memory and computational overhead, it can also lead to new problems like increased communication cost. Cai et al.~\cite{cai2021dgcl} show that the communication overhead in distributed GNN training can take from 50\% to 90\% of the training time, which is mainly due to feature communication during the aggregation phase~\cite{shao2024distributed}.

Several mini-batch sampling strategies have been proposed in the literature to help mitigate these problems. For example, GraphSAGE~\cite{hamilton2017inductive} introduces node-wise sampling to limit the number of neighbors aggregated per node. FastGCN~\cite{chen2018fastgcn} and LADIES~\cite{zou2019layer} further refine this idea by employing layer-wise sampling and importance sampling techniques, respectively, to reduce computation while maintaining convergence properties. However, at the beginning of each iteration, the models use these strategies to create a computation graph, which requires features of the nodes. Some of the features are local due to the hosting of the partition of the dataset on that exact machine, while other features remain remote on other machines. When fetching the remote features located in other training machines, the training process often stalls while communicating with the remote machines to extract the features.

Some of the works in the literature specifically address the communication overhead in fetching the remote features during training through modified sampling decisions~\cite{jiang2021communication}. Distributed GNN training frameworks like DistDGL~\cite{zheng2020distdgl} cache one-hop halo (ghost) nodes (with node IDs) to avoid communication overhead. However, fetching a large number of features for multiple GNN layers requires deeper system-level optimization to make the training process more efficient. To address this, the P3 system~\cite{gandhi2021p3} implements pipelining of feature communication with computation to hide latency, while DGCL~\cite{cai2021dgcl} takes a different approach by optimizing data transfer primitives based on workload and network conditions. Building on this foundation, DGCL introduces a communication planner that uses a storage hierarchy to schedule peer-to-peer transfers. These works are complementary as they use sampling decision biasing, scheduling, and dynamic caching to hide or mitigate communication delays. However, there remains room for a strategy that \textit{preemptively} avoids as much communication as possible by using the graph computation block itself. Along with hiding communication through pipelining, it is important to reduce the actual communication to effectively reduce the bottleneck and speed up training. The process must also avoid slowing the convergence of the GNN model.

\begin{figure}[htbp]
    \centering
    \includegraphics[width=0.45\textwidth]{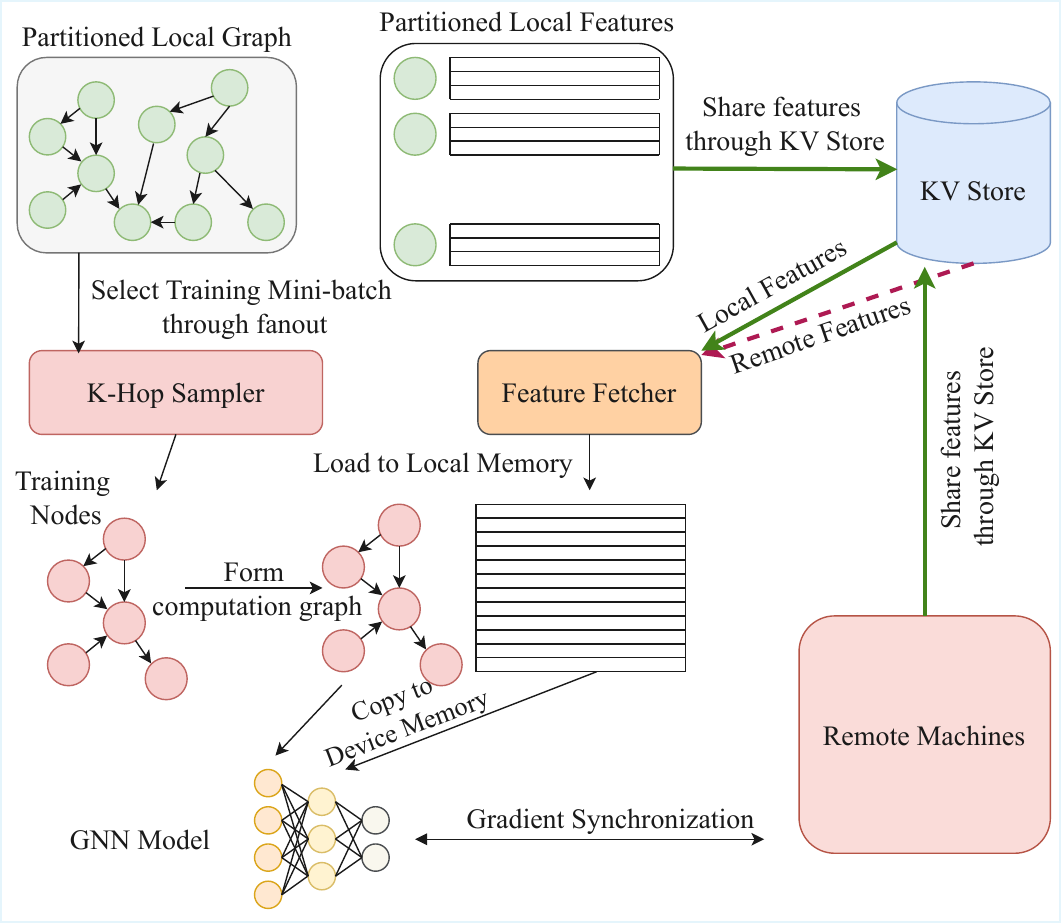} 
    \caption{Feature communication in distributed mini-batch GNN training.}
    \label{fig:comm_overheads}
\end{figure}

Figure \ref{fig:comm_overheads} illustrates the generalized distributed mini-batch GNN training pipeline. The data are partitioned across training machines, and features are shared through a Key-Value (KV) Store abstraction. As the sampled subgraphs form a computation graph, features need to be fetched on the fly to run training inferences, which is the primary bottleneck in training large graph datasets.

To mitigate this bottleneck, we present \name, a novel distributed GNN training framework that aims to minimize communication overhead at its source to improve training throughput and energy efficiency. \name makes the following contributions:
\begin{itemize}
    \item It embeds a fixed-size independent feature cache directly within each worker, avoiding management and network overheads of centralized or fully replicated data stores. The aggregate cache capacity of the system scales horizontally with the number of workers.
    \item It exploits the long-tail distribution of data access in GNNs, where a small subset of high-degree "celebrity" nodes is accessed far more frequently than others. It develops an adaptive, dual-buffer caching policy that prioritizes retaining these frequently accessed node features. This ensures that the most computationally valuable data remains in the local cache across iterations, reducing redundant network traffic for the same features.
    \item It models a highly efficient asynchronous prefetcher that runs concurrently with the training iterations to prepare mini-batches for the next iteration. By maintaining a dynamic queue of requests at the step level, the prefetcher effectively pipelines communication with computation and hides the residual communication latency, reducing the overall training time. 
\end{itemize}

By utilizing these key innovations, \name improves end-to-end training throughput significantly on average over the state-of-the-art models on three benchmark graph datasets by vastly reducing the time spent on fetching the features from remote machines. This is achieved by reducing the actual volume of network communication, along with effective overlapping of I/O operations with computation. We also observe near-linear scalability with increasing the number of computing units in distributed training. Furthermore, \name reduces energy consumption in both CPU and GPU, mainly due to the reduction in training time. 
\section{Background and Related Work}
\subsection{Distributed GNN Training}
A graph can be represented as $G = (V, E)$, where $V = \{v_1, v_2, \ldots, v_n\}$ is the set of nodes and $E \subseteq V \times V$ represents the set of edges. Each node $v_i$ contains a feature vector $x_i \in \mathbb{R}^d$. The complete feature space is denoted by $X \in \mathbb{R}^{ n\times d}$. If the graph is labeled, each node $v_i$ has a corresponding $y_i$ from a label set $Y$.
The computation in a GNN layer can be denoted by:
\begin{equation}
h_v^{(l+1)} = \text{COMB}^{(l)}\left(h_v^{(l)}, 
\text{AGG}^{(l)}\left(\{h_u^{(l)} : u \in \eta(v)\}\right)\right)
\end{equation}

Where the feature vector of node $v$ at layer $l$ is denoted by $h_v^{(l)}$ and the set of its neighbors is $\eta(v)$. The Aggregation function $\text{AGG}^{(l)}$ gathers information from neighboring nodes. Then the combination function $\text{COMB}^{(l)}$ merges the aggregated features with the features of node $v$. The definition of these functions is arbitrary and dependent on specific GNN architectures (e.g., the weighted sum for aggregation in GCN~\cite{kipf2016semi}, mean/max pooling with concatenation in GraphSAGE~\cite{hamilton2017inductive}).

Full‑batch GNN training~\cite{kipf2016semi} quickly exceeds GPU memory on large graphs, leading to \textit{Mini‑batch Sampling}~\cite{hamilton2017inductive}, which builds a much smaller computation graph for every iteration. In the literature, various mini-batch sampling strategies have been proposed. \textit{Node-wise sampling}, as in GraphSAGE~\cite{hamilton2017inductive}, samples a fixed number of neighbors per node to reduce neighborhood explosion, where for each node \( v \) at layer \( l \), a subset \( \widetilde{\eta}(v) = \text{SAMPLE}(\eta(v), k) \) is selected. While it is efficient, it can introduce variance. VR-GCN~\cite{chen2017stochastic} proposes historical activations as control variates. \textit{Layer-wise sampling}, such as FastGCN~\cite{chen2018fastgcn}, samples nodes independently at each layer via importance sampling, while LADIES~\cite{zou2019layer} improves it by enforcing inter-layer connectivity to ensure meaningful contributions from the sampled nodes. In contrast, \textit{subgraph sampling} strategies like ClusterGCN~\cite{chiang2019cluster} and GraphSAINT~\cite{zeng2019graphsaint} form mini-batches by extracting entire induced subgraphs. 

 In distributed GNN training, the graph dataset is partitioned across multiple machines, with each machine storing a subset of nodes and their features. During training, mini-batches often require multi-hop neighbor features, many of which reside on remote partitions. These features are retrieved via Remote Procedure Calls (RPCs), often synchronously, which introduces a significant communication bottleneck leading to stalling of training and GPU under-utilization. Empirical findings show that up to 80\% of training time may be spent on communication and serialization~\cite{gandhi2021p3}.

\subsection{Related Works}
 Some sampling strategies aim to reduce communication overhead by limiting the number of remote nodes sampled through locality-aware sampling. Jiang et al.~\cite {jiang2021communication} skews the neighbor sampling to prioritize local nodes over remote nodes with careful adjustment and ensures that it does not affect convergence much. However, it still has an impact on overall accuracy, and the sampling probabilities are fixed, so it may not adapt well to various configurations. DGS~\cite{wan2022dgs} follows a similar method, but uses an explanation graph to guide the sampling. However, it requires the construction and maintenance of a separate computation graph online that adds overhead and is subject to the performance of the explanation module. 

The primary strategy used in these methods to limit communication is partitioning the data using partitioning algorithms like METIS\cite{karypis1998fast} to minimize edge cuts (used in DistDGL\cite{zheng2020distdgl}) to reduce the dependency on remote partitions. However, limiting the communication between partitions through a partitioning algorithm is an NP-hard problem \cite{bazgan2025dense}. Quantization and compression of feature tensors are used to reduce communication overhead in some works. Sylvie \cite{zhang2023boosting} uses one-bit quantization for gradient and features, AdaQP~\cite{wan2023adaptive} stochastically quantizes features, embeddings, and gradients into low-precision integers, and SC-GNN~\cite{wang2024sc} uses an explanation graph to prioritize semantically important features. These methods usually have an accuracy trade-off and are subject to rigorous experimental validation. For system-level optimization of communication overhead, the P3~\cite{gandhi2021p3} system introduces a pipelining system to hide the communication in the computation background. P3 improves the utilization of resources but does not reduce the total data transferred over the network. Dorylus~\cite{thorpe2021dorylus} is another strategy that offloads GNN training to the CPU and uses asynchronous process management for concurrent executions of the training steps. While using serverless computing for GNN training is innovative, it does not address redundant data transfer over the network.

In CNN training, Clairvoyant \cite{dryden2021clairvoyant} uses a pre-computed fixed linear access sequence for prefetching. This is based on the core assumption that the mini-batch accesses are predictable and sequential with uniform distribution of accesses. This holds for CNN workloads that operate on regular grid-structured datasets. In contrast, GNN training workloads have topology-driven, non-sequential data access where the data access is determined by graph structure and connectivity, resulting in highly skewed, long-tail access distributions. Furthermore, Clairvoyant requires careful coordination between participants through an initial all-gather operation, which is fundamentally non-scalable in GNN training due to the explosion of edge connections in GNN datasets. RapidGNN uses a decentralized cache with constant memory overhead and an adaptive cache update policy to manage the long-tail distribution of node feature requests efficiently. It also ensures scalability by demonstrating increased throughput and speedup as the number of compute machines grows without increasing CPU memory and with a modest, controlled increase in GPU memory due to the device-resident cache.

\subsection{Baseline GraphSAGE Model}
Distributed GNN training frameworks like Deep Graph Library (DGL)\cite{wang2019deep} usually fetch the features needed for an iteration of training by dispatching on-the-fly fetch requests for features of each node, which can result in frequent and redundant RPC calls that can dominate training time \cite{gandhi2021p3}. For our implementation, we use distributed GraphSAGE from DGL to learn a large graph by partitioning it over multiple machines and then using mini-batch training to update the model parameters. The graph is divided into \(G_{i}\) partitions using Random Partition method\cite{zheng2020distdgl} or METIS\cite{karypis1998fast}.  Each partition is assigned to a training machine and is used by that machine as its local graph partition for running the training process of the GNN and updating the model parameters. The number of training workers and partitions should be the same. After partitioning, the partitioned dataset is referenced to the training workers so that each can load their assigned partition. The training device can be either a CPU or a GPU. DistGraph provides an abstraction of the graph partitioning so that the local processes can access the whole graph structure when needed using the neighborhood sampler. Mainly, it is used to fetch the neighborhood information of the seed nodes to build the computation blocks. On the other hand, the KV Store stores the features of the local nodes and provides a backend mechanism through which the training process can fetch the features from the remote partition during the feature aggregation phase of the training. This fetching of the remote nodes' features during training is one of the primary bottlenecks in communication efficiency, as the size of the features is quite large and can stack up due to frequent and redundant access requests. 

The Reddit dataset provided by DGL comprises $232{,}965$ nodes, each represented by a $602$-dimensional feature vector \cite{dgl_reddit_dataset}. In our profiling run with two machines, we found approximately $15{,}000$ nodes to be on the remote partition per batch operation. To estimate the network footprint of the node feature tensor:

\begin{itemize}
  \item \textbf{Node feature tensor size:}\\
  $232{,}965 \times 602 \times 4\,\mathrm{B}
   = 560{,}979{,}720\,\mathrm{B}
   \approx \mathbf{534.99\,\mathrm{MiB}}$.
  
  \item \textbf{Per-batch transfer} (batch size $=1{,}000$, 2-partition setup,\\
  $\approx 15{,}000$ remote nodes/batch):\\
  $15{,}000 \times 602 \times 4\,\mathrm{B}
   = 36{,}120{,}000\,\mathrm{B}
   \approx \mathbf{34.45\,\mathrm{MiB}}$.
  
  \item \textbf{Batches per epoch:}\\
  $\lceil 153{,}431 / 1{,}000\rceil = 154$.
  
  \item \textbf{Total data per epoch:}\\
  $154 \times 34.45\,\mathrm{MiB}
   = 5{,}304.8\,\mathrm{MiB}
   \approx \mathbf{5.18\,\mathrm{GiB}}$.
\end{itemize}

The volume of data transfer can increase substantially when more machines are involved, the dataset is larger, and the batch size increases over a large number of epochs. This highlights the communication overhead in distributed GNN training, where feature data loading can become a significant bottleneck.

Unlike the methods that have been discussed above, which react to communication overhead by partitioning, limiting remote node numbers, and quantization/compression at the cost of accuracy and computation overhead, our novel approach proactively reduces communication volume and redundant data fetching operations by using precomputed feature access patterns. By fixing the seeds, we gain a \textit{priori} knowledge of which remote node features will be needed, when they will be needed, and how often they will be needed to design the caching of the most used remote nodes in bulk operations and reuse them. The prefetching mechanism then pipelines the feeding of the features to the training process for upcoming batches. This transforms the system from being a reactive, on-demand process to a coordinated pipeline, yielding a reduction in the number of RPC calls over the network, substantial speedup in training, with minimal changes to the GNN architectures.

\section{Proposed Work}
\label{sec:proposed}
RapidGNN addresses the communication bottleneck of distributed GNN training during the feature aggregation phase. This, in effect, helps improve the overall training throughput and energy efficiency of the training run as well. With the distributed sampler fixed with a deterministic seed, the sequence of batches and their input nodes is known ahead of time using efficient precomputation by streaming the precomputed data to and from the local storage (SSD). \name\ uses this data and implements a scheduled data path that enumerates the per-epoch batches $\{B_{e}\}_{e=1}^{\epsilon}$ and the union of the fan-out nodes. It then performs a one-shot, vectorized build of a bounded steady cache $C_s$ containing the $n_{\mathrm{hot}}$ most frequently used \emph{remote} nodes and maintains a rolling prefetch queue of size $Q$ that hosts the required features for the upcoming training batches. At runtime, the system is cache-first for the Prefetcher and Prefetcher-First for the training loop. RPC calls to fetch remote node features are made only when there are residual misses by the Prefetcher from the cache and by the Trainer from the Prefetcher. In contrast to reactive caching and pipelining, which use online analysis and are limited by the lookahead window due to simulation of Belady's MIN \cite{zhang2023two} to issue fine-grained RPCs on the critical step path, \name\ fixes the access pattern offline and builds $C_s$ once.  For readability, \(b_i\) denotes the \(i\)-th batch of the current epoch \(e\) (i.e., \(b_i\in B_{e}\)).

\paragraph{Offline enumeration and cache construction.}
Let epoch $e$ comprise $\beta$ batches $B_{e}=\{b_1,\dots,b_\beta\}$, and let $N_i^{e}$ be the input nodes for $b_i$.
We precompute
\begin{equation}
N \;=\; \bigcup_{e=1}^{\epsilon} \; \bigcup_{i=1}^{\beta} \; N_i^{e}, 
\qquad 
N_{\mathrm{remote}} \;=\; N \setminus N_{\mathrm{local}},
\end{equation}
where $N_{\mathrm{local}}$ denotes nodes whose features reside on this worker. We
rank nodes in $N_{\mathrm{remote}}$ by access frequency over $\{B_{e}\}$, and select
\[
N_{\mathrm{cache}} \;=\; \{\, v \in N_{\mathrm{remote}} \mid \textit{freq}(v) \text{ ranks top-}n_{\mathrm{hot}} \,\}.
\]
A single vectorized RPC (\textit{VectorPull}) materializes features for $N_{\mathrm{cache}}$ into the steady device memory cache $C_s$.

\paragraph{Rolling prefetch and execution.}
During epoch $e$, a background Prefetcher hosts features for the next $Q$ batches $\{b_{i+1},\dots,b_{i+Q}\}\subseteq B_{e}$ into a bounded queue. This is managed by dispatching fetch requests asynchronously for the upcoming batch streamed from storage. In parallel (to the prefetching and training), a secondary cache $C_{\mathrm{sec}}$ (Buffer 1) is built for $B_{e+1}$ and swapped into $C_s$ (Buffer 0) at the epoch boundary. For batch $b_i$, the runtime serves $N_i^{e}$ from $C_s$ and the staged queue of the Prefetcher. If a miss set $M_i^{e}\subseteq N_i^{e}$ remains, a \textit{SyncPull} fetches those features from the Distributed KV store into the Prefetcher. If a complete batch is not found in the Prefetcher (due to Prefetcher-Trainer race), the features of that batch are fetched through the default path at the default path fetch time. Then the features are copied to the device, and the usual training step runs.

\paragraph{Invariants and bounds.}
Under a seed-controlled sampler and a static graph, the per-step communication in an epoch equals the miss set by the Prefetcher: the RPC count for $b_i$ is $|M_i^{e}|$. Per-worker device memory is bounded by
\[
\mathrm{Mem}_{\mathrm{device}} \;\le\; 2\,n_{\mathrm{hot}}\cdot d \;+\; Q\,m_{\max}\,d ,
\]
\noindent
where $d$ is feature dimensionality and $m_{\max}=\max_{e,i}|N_i^{e}|$. This is because, along with the Prefetcher, the cache is implemented as a double buffer. The schedule and cache contents are deterministic given the seed, therefore, enabling easier profiling and reproducibility of the experiments.

\begin{algorithm}[!htb]
\caption{\name{}: Deterministic schedule with steady cache and rolling prefetch}
\label{alg:rapidgnn}
\begin{algorithmic}[1]
\Statex \textbf{Input:} Graph $G$; fan-out $F$; epochs $\epsilon$; cache size $n_{\mathrm{hot}}$; prefetch window $Q$
\Statex \textbf{Output:} Parameters $\theta$; per-epoch time $\{t_{e}\}$; per-epoch RPCs $\{rpc_{e}\}$

\State Precompute $\{B_{e}\}_{e=1}^{\epsilon}$ with fan-out $F$
\State $N \gets \bigcup_{e,i} N_i^{e}$;\quad $N_{\mathrm{remote}} \gets N \setminus N_{\mathrm{local}}$
\State $N_{\mathrm{cache}} \gets \textit{TopHot}(N_{\mathrm{remote}},\, n_{\mathrm{hot}},\, \textit{freq})$
\State $C_s \gets \textit{VectorPull}(N_{\mathrm{cache}})$

\For{$e = 1$ \textbf{to} $\epsilon$} \label{ln:epoch-start}
    \State $rpc_{e} \gets 0$;\quad $t_{\text{start}} \gets \textit{Clock}()$
    \If{$e < \epsilon$}
        \State \textit{Parallel:} build $C_{\mathrm{sec}}$ from $B_{e+1}$ \label{ln:sec-build}
    \EndIf
    \State \textit{Parallel:} prefetch features for next $Q$ batches
    \For{each $b_i \in B_{e}$}
        \State hits $\gets \textit{GetFeatureFromCache}(N_i^{e})$
        \If{\textit{miss} on $M_i^{e} = N_i^{e} \setminus$ hits}
            \State \textit{SyncPull}$(M_i^{e})$;\quad $rpc_{e} \gets rpc_{e} + |M_i^{e}|$
        \EndIf
        \State $\theta \gets \textit{Train}(\theta,\, b_i)$
    \EndFor
    \If{$C_{\mathrm{sec}}$ ready} \label{ln:swap}
        \State $C_s \gets $ $C_{\mathrm{sec}}$
    \EndIf
    \State $t_{e} \gets \textit{Clock}() - $ $t_{\text{start}}$
\EndFor
\end{algorithmic}
\end{algorithm}
Algorithm \ref{alg:rapidgnn} presents \name's three-stage schedule: deterministic sampling, hot-set caching, and asynchronous prefetching in a unified workflow. 

\begin{figure*}[!ht]
  \centering
  \includegraphics[width=\textwidth]{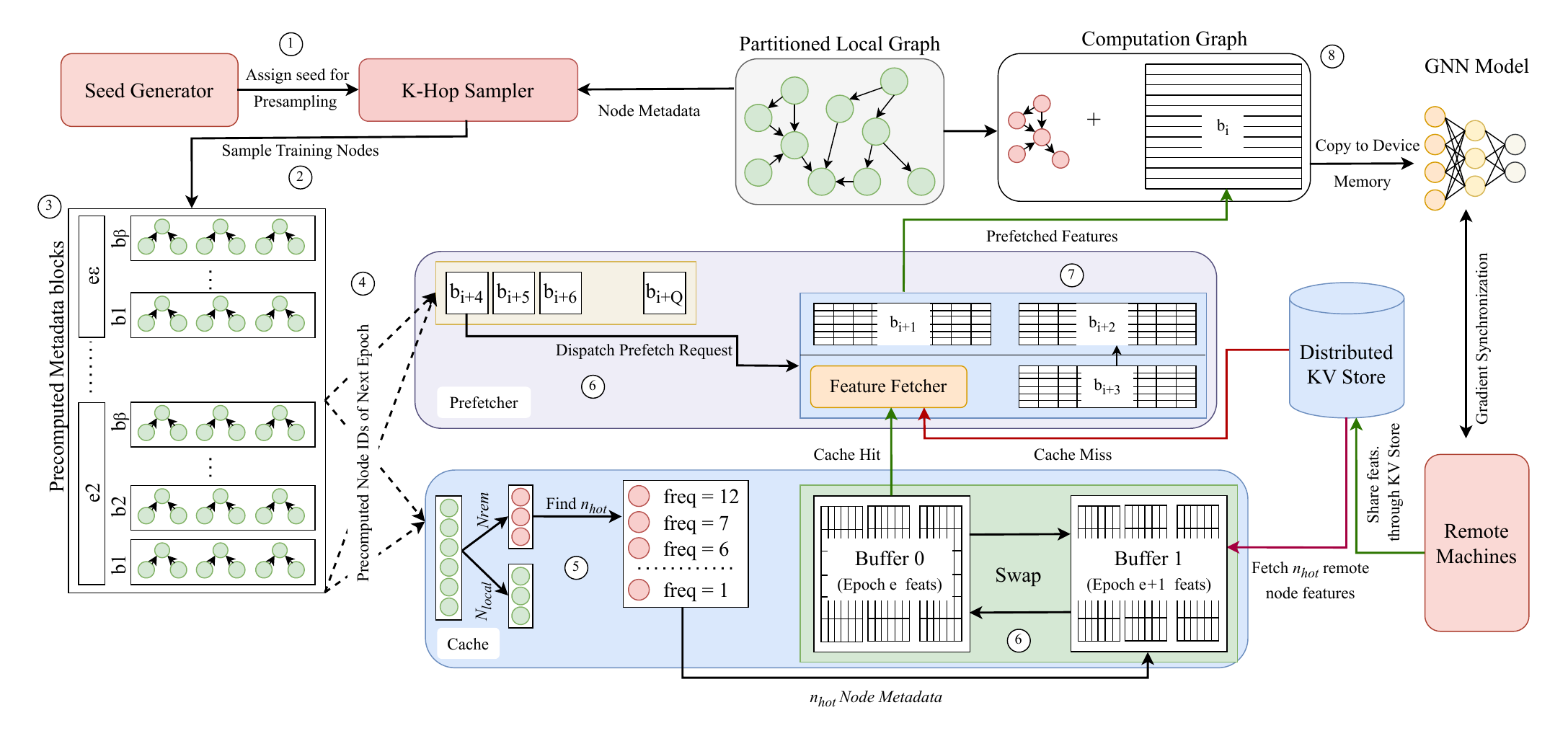}
  \caption{\name\ overview. A seed fixes a K-hop sampler (fan-out $F$) that precomputes, per epoch $e$, the batch set $B_{e}$ and the node IDs they reference. Remote nodes are ranked by frequency; the top-$n_{\mathrm{hot}}$ form $N_{\mathrm{cache}}$ for the steady cache $C_s$ (Buffer 0). A secondary cache $C_{\mathrm{sec}}$ (Buffer 1) is prepared for epoch $e{+}1$ and swapped at the boundary. A \textbf{Prefetcher} stages the next $Q$ batches $\{b_{i+1},\dots,b_{i+Q}\}$, the \textbf{Feature Fetcher} serves hits from cache and issues \textit{SyncPull} on miss sets via the Distributed KV store, and the computation graph for $b_i$ proceeds on the device.}
  \label{fig:rapidgnn}
\end{figure*}

\paragraph{Scalability}
RapidGNN scales with the increase in the number of machines because of bounded per-worker memory, constant per-worker communication, and decentralized coordination. With a growing number of $P$ workers, each worker's device memory is constrained by $\text{Mem}_{\text{device}}$. This bound is strictly dependent on the size of the cache and prefetch queue $(n_{\text{hot}}, Q)$.  Given that an effective partition algorithm maintains a remote node fraction of $c$, each worker's remote feature requests remain proportional to $c \cdot |\text{batch}|$. The cache hit rate $h$ reduces this to $(1-h) \cdot c \cdot |\text{batch}|$ per step. Since both $c$ and $h$ are properties of the graph structure and access patterns rather than $P$, per-worker communication overhead stays bounded as the system scales. One of the significant architectural overheads from Clairvoyant Prefetching \cite{dryden2021clairvoyant} is the initial all-gather operation, which is absent here due to our decentralized precomputation of the complete sampling run. The streaming of the extensively sampled data into the storage system enables running precomputation without increasing CPU memory usage, enabling us to perform this operation on massive graph datasets like OGBN-Papers100M.

\paragraph{Seeding and reproducibility.}
In the precomputation stage, we assign each worker $w$ a PRNG seed via
\[
s_{e,i}^{(w)} = H(s_0, w, e, i),
\]
Where $H$ is a cryptographic hash function and $s_0$ is a global base seed.

\begin{proposition}\label{prop:seeded-sgd}
Let the batches $B_e$ be drawn by a uniform neighbor sampler on $G$ with fan-out $F$ using seeds $s_{e, i}^{(w)}$ above. For any $b_i \in B_e$:
\begin{enumerate}
  \item[(a)] The marginal law of $b_i$ matches that of an online uniform draw with fan-out $F$.
  \item[(b)] For distinct tuples $(e,i,w) \neq (e',i',w')$, the resulting batches are independent.
  \item[(c)] The gradient estimator $g(\theta;b_i) = \nabla_{\theta}\frac{1}{|b_i|}\sum_{v\in b_i}\mathcal{L}_v(\theta)$ is unbiased with positive variance.
\end{enumerate}
\end{proposition}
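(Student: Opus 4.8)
The plan is to push everything through one idealization of $H$ and then do bookkeeping on top of the sampler's correctness specification. Concretely, I would model $H$ as a random oracle---so that on distinct inputs its outputs are i.i.d.\ uniform strings (a PRF/computational-indistinguishability assumption gives the same conclusions up to negligible error, which is the honest statement for a real hash, but the random-oracle version yields the clean equalities in the proposition). Since the base seed $s_0$ is fixed while the coordinates $(w,e,i)$ differ across distinct tuples, this makes $\{s_{e,i}^{(w)}\}$ a family of i.i.d.\ uniform seeds. I would also record, as a standing assumption, what ``uniform neighbor sampler'' means operationally: it is a deterministic map $\Phi(\cdot\,;G,F)$ that, fed uniform random bits, outputs a batch (seed nodes plus their sampled $K$-hop fan-out) distributed exactly as a fan-out-$F$ uniform neighbor sample, with seed nodes drawn i.i.d.\ (with replacement).

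Given this, (a) and (b) are short. For (a): $b_i=\Phi(s_{e,i}^{(w)};G,F)$, and an online uniform draw is the same $\Phi$ evaluated on fresh uniform bits; both feed $\Phi$ a uniform input, so the two pushforward laws on batches coincide (they are indistinguishable in the computational reading). For (b): distinct tuples give distinct hash inputs, hence independent uniform seeds in the random-oracle model, and since each $b_i$ is a deterministic function of its own seed alone, the batches inherit this independence. The one caveat I would flag here is that independence of \emph{same-epoch} batches relies on seed nodes being drawn with replacement; a shuffle-and-partition epoch would make $b_1,\dots,b_\beta$ negatively correlated, so the statement should be read under the per-batch i.i.d.\ sampling convention above.

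For (c) the first task is to identify the objective against which unbiasedness holds, because the naive mini-batch estimator is \emph{not} unbiased for the full-neighborhood GNN loss---the node-wise neighbor sampling biases it. I would define the neighbor-sampling-averaged per-node loss $\ell_v(\theta)=\mathbb{E}_{\mathrm{nbr}}[\mathcal{L}_v(\theta)]$ and the population objective $\mathcal{L}(\theta)=\tfrac{1}{|V_{\mathrm{train}}|}\sum_v \ell_v(\theta)$. With seed nodes i.i.d.\ uniform, linearity gives $\mathbb{E}\!\big[\tfrac{1}{|b_i|}\sum_{v\in b_i}\mathcal{L}_v(\theta)\big]=\mathcal{L}(\theta)$; exchanging $\nabla_\theta$ with the expectation---legitimate because it is a finite sum over the finite space of fan-out subgraphs and the network is smooth in $\theta$---yields $\mathbb{E}[g(\theta;b_i)]=\nabla_\theta\mathcal{L}(\theta)$, i.e.\ unbiasedness. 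Positive variance needs a mild non-degeneracy hypothesis, which I would add to the statement: provided the batch size is below $|V_{\mathrm{train}}|$ and the realized batch gradient is not constant over the sample space (equivalently, the per-node gradients are not all equal), $g(\theta;b_i)$ is a non-constant random vector and hence has strictly positive variance in at least one coordinate.

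The main obstacle is conceptual and lives entirely in (c): parts (a) and (b) are essentially immediate once $H$ is idealized, so the only care there is stating that idealization and the with-replacement convention. In (c), the subtlety is that ``unbiased'' is only true relative to the neighbor-sampling-averaged objective $\mathcal{L}$, not the full-graph GNN loss, and one must supply both the standard (but necessary) regularity for swapping $\nabla_\theta$ and $\mathbb{E}$ and an explicit non-degeneracy assumption so that ``positive variance'' is a theorem rather than a generic remark.
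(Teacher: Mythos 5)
Your proposal is correct and follows the same skeleton as the paper's proof for (a) and (b): idealize $H$ so that distinct inputs yield independent uniform seeds, note that each batch is a deterministic function of its own seed, and conclude equality of marginal laws and mutual independence. Where you genuinely diverge is in the care you take with the hypotheses. First, you make explicit the with-replacement (per-batch i.i.d.\ seed node) convention that independence of same-epoch batches requires; the paper's proof asserts independence from ``non-overlapping PRNG streams'' without flagging that a shuffle-and-partition epoch would break it, so your caveat is a real strengthening rather than a detour. Second, and more substantively, in (c) the paper simply writes $\mathbb{E}[g(\theta;b_i)]=\nabla_\theta\mathcal{L}(\theta)$ ``by linearity of expectation and the law of total expectation'' and appeals to non-degeneracy plus the empirical convergence curves for positive variance; it never says which $\mathcal{L}$ is meant. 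You correctly observe that the neighbor-sampled estimator is \emph{not} unbiased for the full-neighborhood GNN loss and repair this by defining the target as the neighbor-sampling-averaged objective $\ell_v(\theta)=\mathbb{E}_{\mathrm{nbr}}[\mathcal{L}_v(\theta)]$, against which unbiasedness does follow, and you supply the (finite-sum, smoothness) justification for exchanging $\nabla_\theta$ with the expectation and an explicit non-degeneracy hypothesis for positive variance. In short, your route buys a precise statement of what is actually proved at the cost of slightly weaker-sounding conclusions (unbiasedness relative to the sampling-averaged loss, random-oracle/PRF idealization stated up front), whereas the paper's version is terser but leaves these assumptions implicit.
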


\begin{proof}
(a) The hash function $H$ maps distinct inputs to statistically independent uniform outputs. Since neighbor sampling uses this PRNG output to make uniform random choices, the resulting batch distribution is identical to online uniform sampling.

(b) Distinct tuples $(e,i,w) \neq (e',i',w')$ produce independent hash values $s_{e,i}^{(w)}$ and $s_{e',i'}^{(w')}$, which seed non-overlapping PRNG streams. Independence of the random bits implies independence of the sampling processes and resulting batches.

(c) By part (a), each $b_i$ has the same distribution as a uniform random sample. Therefore $\mathbb{E}[g(\theta;b_i)] = \nabla_{\theta}\mathcal{L}(\theta)$ by linearity of expectation and the law of total expectation. Positive variance follows from randomness in batch composition under non-degenerate loss functions. This is validated in \ref{subsec:convergence}, where Figure \ref{fig:accuracy_comparison} shows that deterministic sampling achieves identical convergence behavior to baseline methods across multiple datasets.
\end{proof}
\section{Methodology}

We implement \name\ on top of the DGL PyTorch implementation that follows the scheduled data path shown in Fig.~\ref{fig:rapidgnn}. We use epochs $e\!\in\!\{1,\dots,\epsilon\}$, batches $b_i\!\in\!B_{e}$, steady cache $C_s$ (Buffer~0), secondary cache $C_{\mathrm{sec}}$ (Buffer~1), prefetch window $Q$, and hot-set size $n_{\mathrm{hot}}$. We discuss the components of RapidGNN in detail by describing the given illustration.

\paragraph{(1) Seed generator.}
A global base seed $s_0$ is mixed with worker id $w$, epoch $e$, and batch index $i$ to derive $s_{e, i}^{(w)} = H(s_0, w, e, i)$, which is then assigned to the K-Hop sampler for the presampling task. This ensures non-overlapping pseudorandom streams across workers, epochs, and batches while preserving reproducibility (Proposition \ref{prop:seeded-sgd}).

\paragraph{(2) K-Hop sampler.}
Using $s_{e, i}^{(w)}$ and fan-out $F$, the K-hop sampler produces the complete set of batches for training run $B_{e}=\{b_1,\ldots,b_\beta\}$ together with input-node id sets $N_i^{e}$ and locality flags with respect to the partitioned local graph (green) vs.\ remote partitions (pink). The output from the sampler contains only metadata of the batches (ids, offsets, locality) and does not contain the features to form the computation block. This task is offloaded to the Prefetcher, which uses this sampled data to perform the task.

\paragraph{(3) Precomputed metadata blocks.}
The generated “metadata block’’ from the presampling task is streamed directly into the SSD to avoid overloading the CPU memory. It is stored as ordered list of batches in $B_{e}$, the arrays for $\{N_i^{e}\}$, and bitmasks that mark $N_{\mathrm{local}}$ vs.\ $N_{\mathrm{remote}}$. These blocks are streamed at runtime to construct a computation block and in the features gathering operation in our caching and prefetching mechanism. 

\paragraph{(4) Prefetcher Scheduling.}
During training runtime, the Prefetcher uses the precomputed blocks to queue prefetch requests for the subsequent upcoming $Q$ batches $\{b_{i+1},\dots,b_{i+Q}\}$. It dispatches fetch requests by coordinating cache hits and misses to construct the feature blocks to be used during training, and is the core infrastructure in pipelining communication with computation.

\paragraph{(5) Cache candidate selection.}
From $N=\bigcup_{i}N_i^{e}$ and the epoch’s locality flag tensors (computed prior to training), we compute $N_{\mathrm{remote}}=N\setminus N_{\mathrm{local}}$, rank by access frequency $\textit{freq}(\cdot)$ over $B_{e}$, and take the top-$n_{\mathrm{hot}}$ as $N_{\mathrm{cache}}=\{v\in N_{\mathrm{remote}}\mid \textit{freq}(v)\ \text{ranks top-}n_{\mathrm{hot}}\}$. This selects nodes empirically shown to be repeatedly accessed across mini-batches (the long-tail pattern typical of graph workloads) as demonstrated in Figure \ref{fig:frequency}.

\paragraph{(6) Double-buffer cache and swap.}
One-shot vectorized RPC (\textit{VectorPull}) is issued to fetch features for $N_{\mathrm{cache}}$ into $C_s$ (Buffer~0) for epoch $e$. In parallel, the secondary cache $C_{\mathrm{sec}}$ (Buffer~1) is built (while the current training runs and the primary cache serves the Prefetcher). An atomic cache swap operation is performed at the epoch boundary to place the secondary cache as the next primary cache $C_{\mathrm{sec}}\!\rightarrow\!C_s$. The prefetch requests for $\{b_{i+1},\dots,b_{i+Q}\}$ then use the swapped cache to obtain the most remote nodes from the cache consequently.

\paragraph{(7) Feature Fetcher and staging.}
For each dispatched prefetch request, the Feature Fetcher handles the fetch requests by looking up remote node features from the $C_s$. The remaining misses $M_i^{e}\subseteq N_i^{e}$ are fetched through a vectorized \textit{SyncPull} to the distributed KV Store (which is still fetched out of the critical path). The features are kept in preallocated device tensors for efficient device transfer.

\paragraph{(8) Computation graph and training step.}
When the Trainer requires $N_i^{e}$ for batch $b_i$, the feature tensors are immediately available due to the staging of the data by the Prefetcher in the earlier step. The features are then transferred to the device, along with gathering subgraph information from the already stored graph in the memory to form the computation block for training. The rest of the operation includes the usual forward and backward pass cycle in the GNN model and gradient updates.

Sampler$\rightarrow$Prefetcher and Prefetcher$\rightarrow$ Trainer links use lock-free multi-producer, multi-consumer (MPMC) rings. Cache writes happen when \textit{VectorPull} fetches the candidate features. During this time, the queue is bounded by size $Q$, which stalls only when the Trainer lags, allowing it to fill up $Q$ batches of features before the Trainer consumes them. It resumes prefetching as soon as the depth of the prefetch window falls below $Q$ from consumption of prefetched features by the Trainer.

Metadata blocks are streamed sequentially from SSD, while features are fetched directly into device memory by the cache and the Prefetcher. The per-worker device memory follows the bound discussed in Section \ref{sec:proposed}:
$\mathrm{Mem}_{\mathrm{device}} \le 2\,n_{\mathrm{hot}}\cdot d + Q\cdot m_{\max}\cdot d$, with $d$ being the feature dimension and $m_{\max}=\max_{e,i}|N_i^{e}|$. 

We also instrument the system to track communication volume, cache performance, and resource usage for evaluation.
\section{Evaluation}
To evaluate the effectiveness of \name, we conduct extensive experiments on three benchmark datasets and compare them against the SOTA models. Our evaluation aims to quantify the improvements in training speed, communication reduction, and energy efficiency. Additionally, we provide validation of our Proposition \ref{prop:seeded-sgd}, showing that the deterministic precomputation does not impact the accuracy of the models.

Our strategy primarily exploits the long-tail distribution of access to remote nodes. We tally the frequency of access to remote nodes' features and cache the most frequently used remote nodes in an epoch according to the frequency distribution. Figure~\ref{fig:frequency} shows the distribution of how often each remote node's feature is fetched during training. 
\begin{figure}[htbp]
    \centering
    \includegraphics[width=0.48\textwidth]{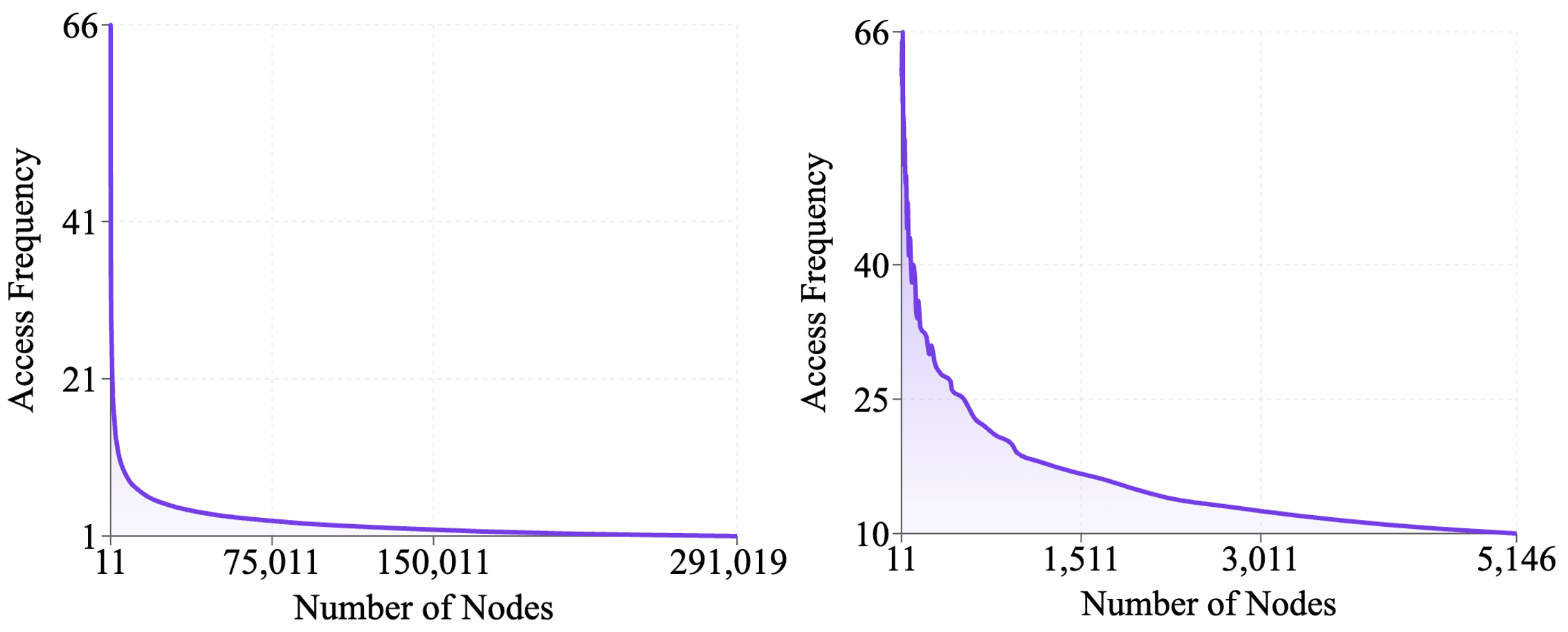}
    \caption{Frequency distribution of remote feature accesses per node. Most nodes are fetched only a handful of times, indicating a long-tail reuse pattern.}
    \label{fig:frequency}
\end{figure}

To demonstrate the frequency distribution, we use the OGBN-Products dataset and sample one of the epochs from our training runs. The analysis reveals a highly skewed power-law distribution in the frequency of node access, with extreme concentration of data at low frequencies. In contrast, the nodes with higher access frequency are relatively lower in count. Nearly half of all nodes (45.3\%) are accessed exactly once, a long tail extending to a maximum frequency of 66, suggesting the presence of highly popular "hub" nodes. This makes the caching decision fairly simple, as instead of real-time graph structure analysis as implemented in other methods, we can tally access frequency in our precomputation phase to guide the caching. The left plot shows the total distribution over the epoch, while the right plot shows the distribution over the top 10\% of the accessed nodes.

\subsection{Experimental Setup}
We perform the experiments on three benchmark graph datasets, which are described in the Table \ref{tab:exp-setup}. The graphs are partitioned with METIS~\cite{karypis1998fast} for RapidGNN, which aims to optimize communication with a balanced edge-cut objective. For baseline, we use a Random partitioner~\cite{hamilton2017inductive} and METIS with DGL for comparison with RapidGNN. These partition schemes allow each machine to host and work with a local graph dataset. We allow one halo hop so that each partition's storage can have the immediate neighbor of its owned node as a ghost node. Each node in these datasets has a high-dimensional feature vector (dense attributes), thus validating the costly feature fetching operation. 

\begin{table}[htbp]
  \centering
    \caption{Experimental Setup: Datasets and Compute Nodes}
  \label{tab:exp-setup}
  \begin{tabular*}{\linewidth}{@{\extracolsep{\fill}}lccc}
    \toprule
    \textbf{Property} & \textbf{Reddit} & \textbf{OGBN-Prod.} & \textbf{OGBN-Papers} \\
    \midrule
    \multicolumn{4}{l}{\textit{Graph Statistics}} \\
    \midrule
    \# Nodes          & 232{,}965     & 2{,}449{,}029   & 111{,}059{,}956 \\
    \# Edges          & 114.8M        & 123.7M          & 1.62B \\
    Feat. Dim.        & 602           & 100             & 128 \\
    \# Classes        & 50            & 47              & 172 \\
    \midrule
    \multicolumn{4}{l}{\textit{Compute Node Specs (per machine, 4 total)}} \\
    \midrule
    Platform          & \multicolumn{3}{c}{Chameleon Cloud} \\
    CPU               & \multicolumn{3}{c}{2× Intel Xeon E5-2670 v3 (24 cores)} \\
    Memory            & \multicolumn{3}{c}{128\,GiB RAM} \\
    GPU               & \multicolumn{3}{c}{2× Tesla P100 (16\,GiB each)} \\
    Storage           & \multicolumn{3}{c}{400–1000\,GB SSD} \\
    Network           & \multicolumn{3}{c}{10\,Gbps Ethernet} \\
    OS                & \multicolumn{3}{c}{Ubuntu 22.04 LTS} \\
    \bottomrule
  \end{tabular*}
\end{table}

We compare our method with three other models - DistDGL GCN~\cite{zheng2020distdgl}, and GraphSAGE~\cite{hamilton2017inductive} model as DGL-Random, and DGL-METIS (using separate partitioning strategy). We use Chameleon Cloud~\cite{keahey2020lessons} GPU nodes to conduct the experiments, which are specified in Table \ref{tab:exp-setup}. We train for 10 epochs in all experiments and report per-epoch and per-step performance metrics. We use the Nvidia NVML~\cite{nvml_lib} and psutil~\cite{psutil_lib} libraries to measure the CPU and GPU metrics during training. 

\subsection{Training Time and Network Efficiency}
\name delivers substantial acceleration across the benchmark datasets and all batch sizes. Table~\ref{tab:speedup} reports the speedup factors relative to GCN, DGL-METIS, and DGL-Random. Averaged over all configurations, \name is \textbf{2.46$\times$}, \textbf{2.26$\times$}, and \textbf{3.00$\times$} faster than DGL-METIS, DGL-Random, and Dist GCN, respectively. The improvement comes from dramatically reducing the waiting time for on-demand feature fetching and using the Prefetcher to feed the features to training, which is still not as high as the network fetch time improvement, as the operations overheads can sometimes lead to Prefetcher and Trainer race. However, we massively reduce the time spent in fetching remote node features over the network (which primarily contributed to the step time reduction) by \textbf{12.70$\times$}, \textbf{9.70$\times$}, and \textbf{15.39$\times$} over DGL-METIS, DGL-Random, and Dist GCN, respectively. This directly supports that caching the hot nodes from long-tail distribution remote nodes reduces the remote fetching, which is highest in the large subgraph construction in Dist GCN.

\begin{table}[htbp]
  \centering
  \setlength{\tabcolsep}{0pt}
  \caption{Speedup of RapidGNN over DGL‑METIS, DGL‑Random, and GCN}
  \label{tab:speedup}
  \begin{tabular*}{\columnwidth}{@{\extracolsep{\fill}} l c ccc ccc}
    \toprule
    \multirow{2}{*}{Dataset} & \multirow{2}{*}{Batch}
      & \multicolumn{3}{c}{Step Speedup} 
      & \multicolumn{3}{c}{Network Speedup} \\
    \cmidrule(lr){3-5}\cmidrule(lr){6-8}
     &  & METIS & Random & GCN 
         & METIS & Random & GCN \\
    \midrule
    OGBN‑Papers   & 1000 & 1.44 & 1.26 & 1.71 & 3.70  & 3.42  & 4.40  \\
                  & 2000 & 1.53 & 1.36 & 1.97 & 4.51  & 3.71  & 5.33  \\
                  & 3000 & 1.65 & 1.48 & 1.90 & 4.26  & 4.10  & 5.25  \\
    OGBN‑Products & 1000 & 1.42 & 1.52 & 1.76 & 5.81  & 6.42  & 6.68  \\
                  & 2000 & 1.32 & 1.34 & 1.57 & 5.19  & 5.36  & 6.08  \\
                  & 3000 & 1.34 & 1.46 & 1.74 & 4.59  & 5.40  & 6.11  \\
    Reddit        & 1000 & 3.97 & 3.41 & 4.89 & 32.57 & 22.25 & 40.51 \\
                  & 2000 & 4.85 & 4.24 & 5.96 & 28.14 & 19.14 & 35.05 \\
                  & 3000 & 4.60 & 4.31 & 5.51 & 25.52 & 17.52 & 29.11 \\
    \midrule
    Average        &  —   & 2.46  & 2.26  & 3.00  & 12.70 & 9.70  & 15.39 \\
    \bottomrule
  \end{tabular*}
\end{table}

We further profiled the actual volume of remote‐feature traffic to validate that our network‐time gains come from reduced data transfer. Figure \ref{fig:avg_rpc} plots the mean data transferred per training step for RapidGNN versus the DGL‑METIS baseline across all three datasets and batch sizes. On OGBN‑Papers, RapidGNN transfers just 1.5MB, 3.1MB, and 4.6MB at batch sizes 1000, 2000, and 3000, roughly 2.8×, 2.7×, and 2.6× less than METIS (4.3MB, 8.3MB, 12.0MB). On the Reddit dataset, we observe even more extensive data savings (due to the highly skewed power-law distribution). RapidGNN moves under 1MB per step (0.3MB, 0.6MB, 0.9MB) versus DGL-METIS's 6.8MB, 10.0MB, and 14.0MB, a 15–23× reduction in network load. On OGBN‑Products, we see a 2.2–2.5× drop (2.0MB/3.8MB/5.4MB vs. 4.8MB/8.8MB/12.1MB).

\begin{figure}[!htbp]
    \centering
    \includegraphics[width=0.48\textwidth]{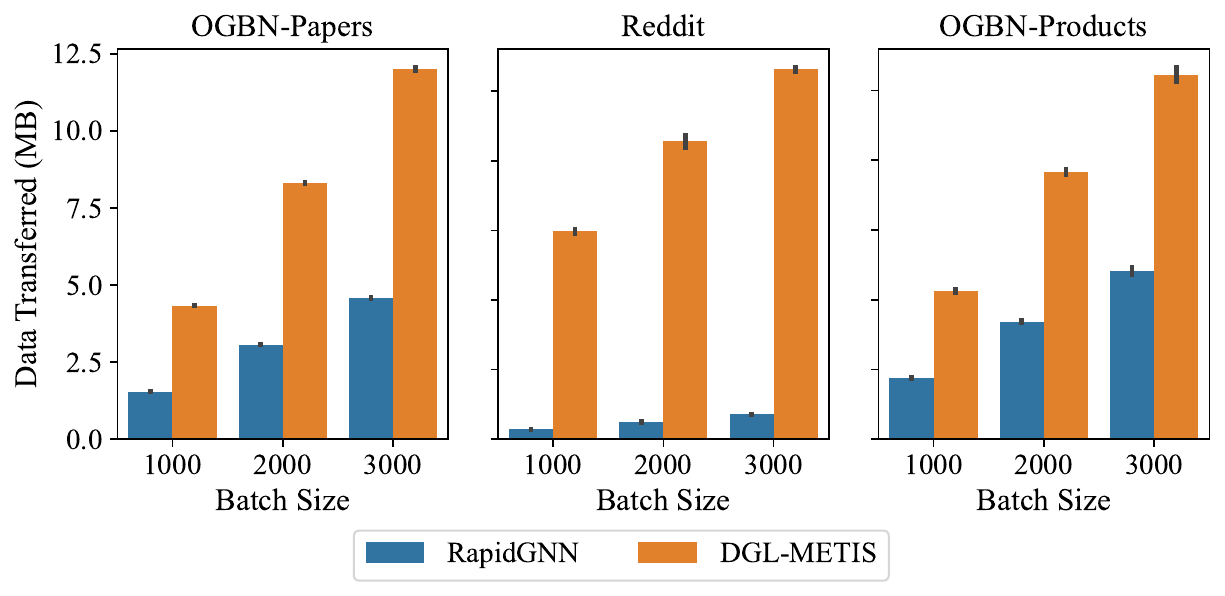}
    \caption{Mean data transfer overhead comparison for graph partitioning method for RapidGNN versus DGL-METIS baseline across three benchmark datasets and varying batch sizes.}
    \label{fig:avg_rpc}
\end{figure}

To validate the effectiveness of cache in reducing the communication volume, we profile a short run of 40 epochs with two machines training the OGBN-Products dataset. We show the findings in Figure \ref{fig:cache_size}, which explains the step-time and epoch-time gains observed in Table \ref{tab:speedup}.

\begin{figure}[!htbp]
    \centering
    \includegraphics[width=0.42\textwidth]{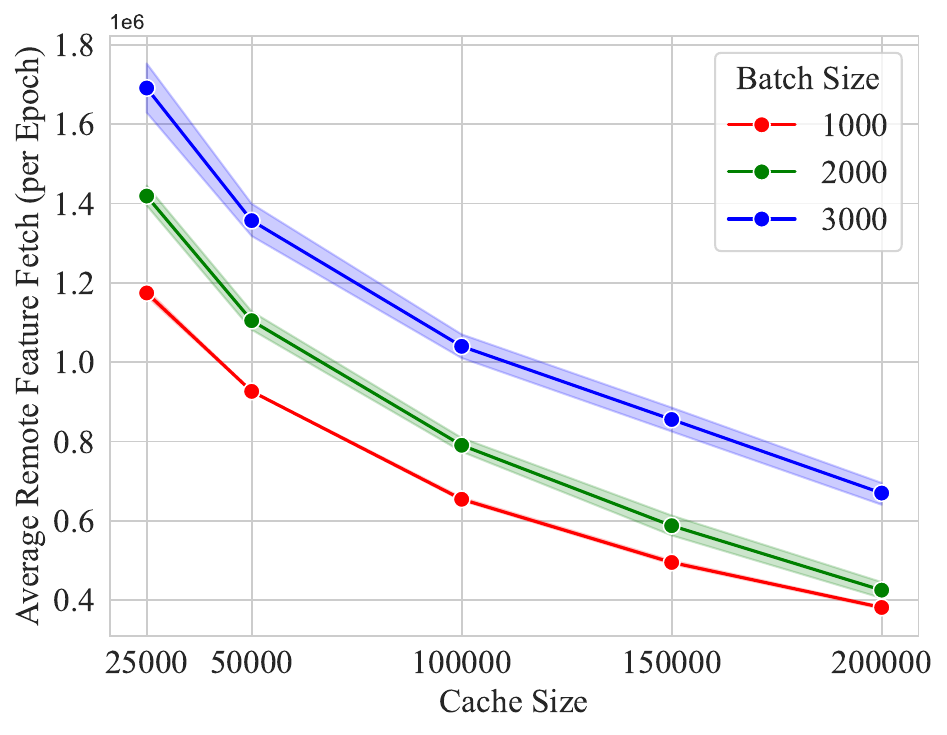}
    \caption{Average number of remote feature fetches per epoch versus cache size}
    \label{fig:cache_size}
    \vspace{-1mm}
\end{figure}
As cache size increases, the average number of remote feature fetches per epoch drops sharply across all batch sizes, demonstrating RapidGNN's ability to capture the high-frequency "hub" nodes in its steady cache. This directly reduces the number of synchronous RPCs issued on the critical path, lowering network latency and minimizing stalls in the training loop. The effect is most pronounced in the low-to-moderate cache range, where small memory allocations yield disproportionately significant reductions in fetch volume, reflecting the long-tail access pattern exploited by RapidGNN. Beyond a certain point, the curve flattens, indicating diminishing returns and enabling practical cache-size selection without excessive memory overhead. By cutting redundant network pulls at the source, RapidGNN sustains high cache-hit mass, improves compute/communication overlap, and converts these micro-level savings into the macro-level throughput improvements reported in the training-time results.

\subsection{Resource Usage and Scalability}
\label{subsec:scale}
 As the number of machines increases, the time required to train an epoch decreases as the workload is distributed across the machines. Therefore, the average epoch time across machines should theoretically reduce as the number of machines increases. In Figure \ref{fig:scalability}, we demonstrate the throughput scaling of RapidGNN with an increasing number of training machines.
\begin{figure}[htbp]
    \centering
    \includegraphics[width=0.42\textwidth]{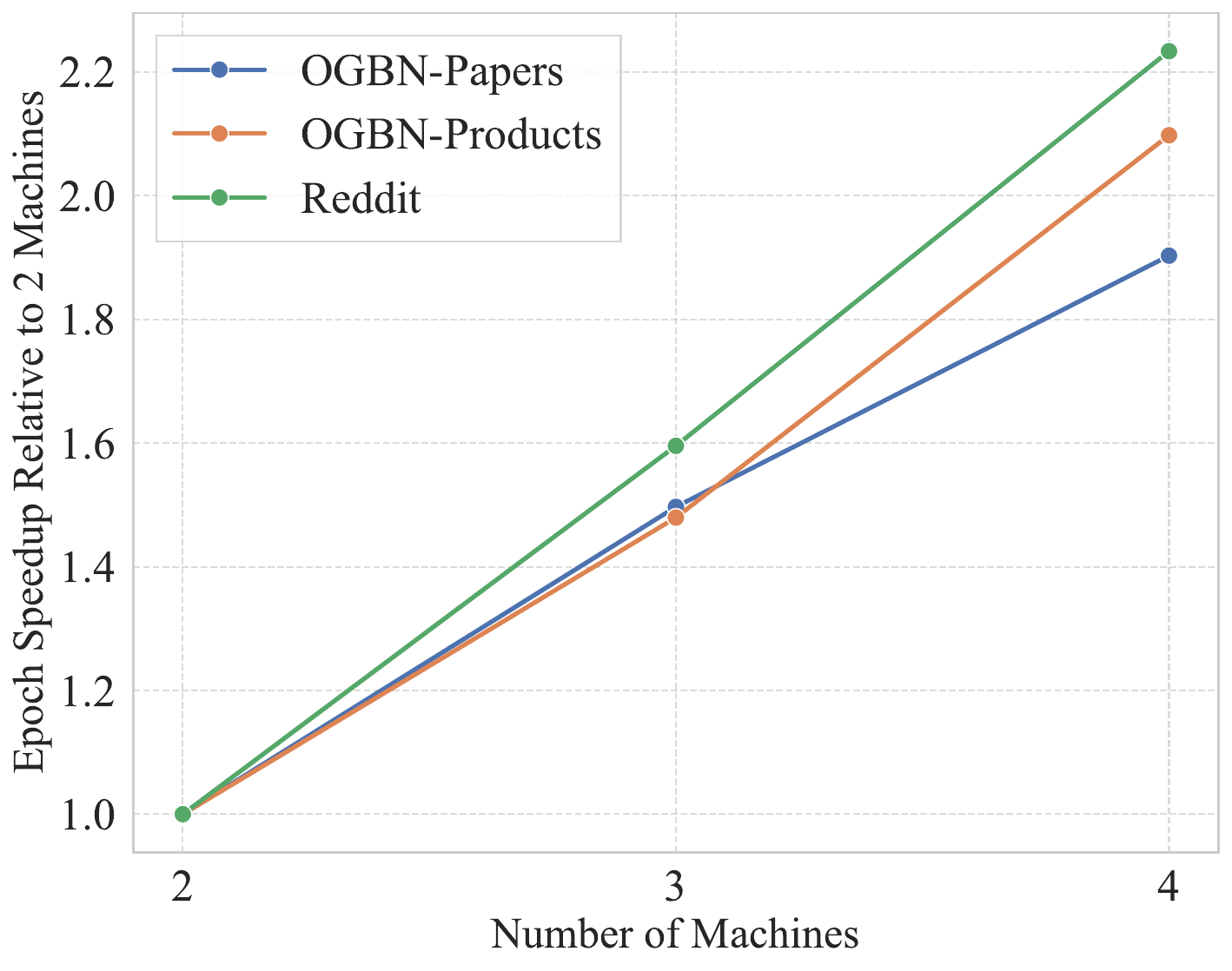}
    \caption{Scalability of RapidGNN across three datasets (OGBN-Papers100M, OGBN-Products, and Reddit)}
    \label{fig:scalability}
\end{figure}
RapidGNN consistently scales linearly as the number of machines increases. For three machines we observe \textbf{1.5} for OGBN-Products to \textbf{1.6} times speedup for the Reddit dataset over a 2-machine setup. For four machines, we observe \textbf{1.7}  to \textbf{2.1} times speedup for the Reddit dataset.

RapidGNN also showed stable resource usage with an increasing number of machines, as shown in Figure \ref{fig:memory_scaling_graph}. GPU memory was consistently higher than the baseline method due to constant cache usage. The CPU memory stayed nearly identical to the baseline method, as our streaming of precomputed data from SSD alleviates any precomputation overhead in CPU memory.

\begin{figure}[!htbp]
    \centering
    \includegraphics[width=0.48\textwidth]{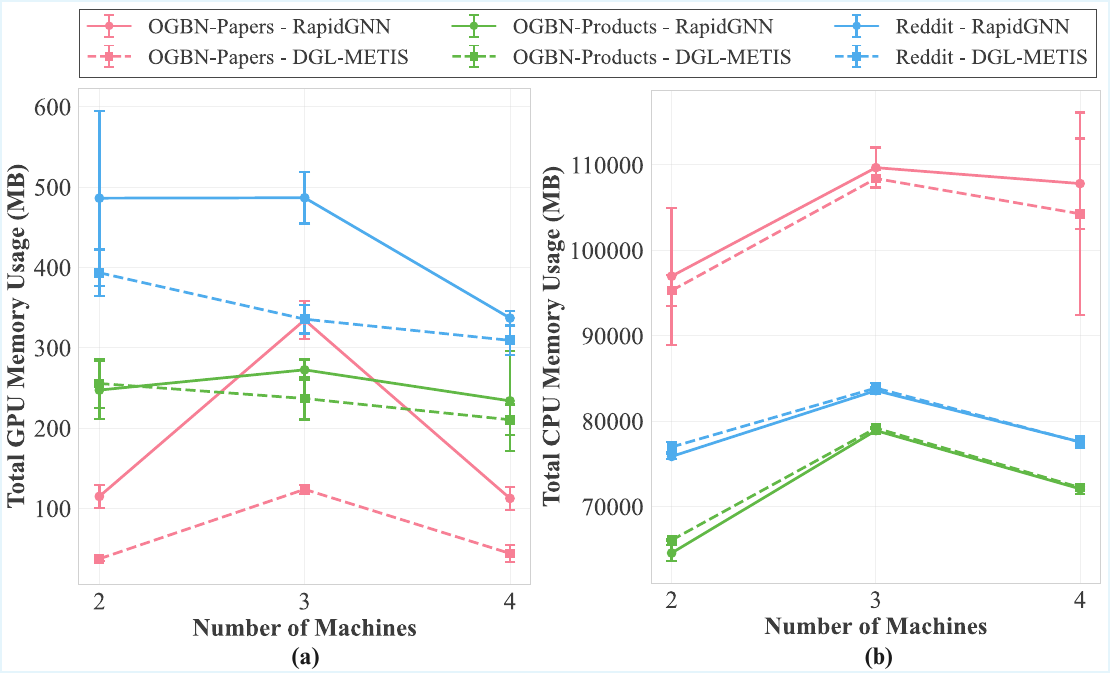}
    \caption{Stable memory scaling of RapidGNN along with DGL-METIS across three datasets (OGBN-Papers100M, OGBN-Products, and Reddit)}
    \label{fig:memory_scaling_graph}
\end{figure}

In Figure \ref{fig:memory_scaling_graph}, we compare the memory usage of \name with DGL-METIS across the benchmark datasets. Figure~\ref{fig:memory_scaling_graph}(a) reports total GPU memory: \name uses more memory because cached and prefetched features are stored on device, but the usage scales smoothly across datasets. Figure~\ref{fig:memory_scaling_graph}(b) reports total CPU memory, which closely tracks the baseline. Overall, \name maintains predictable scaling while trading modest additional GPU memory for reduced communication.

Along with reducing communication overhead and training time, \name is scalable for massive graphs and an increasing number of computing units. In our training run, RapidGNN improved training time with an increasing number of machines and showed consistent improvement in speedup over the best-performing baseline DGL-METIS method, as shown in Figure \ref{fig:scalability}.

\subsection{Energy Efficiency}
As a direct outcome of our highly efficient data transfer scheduling and improved training throughput, \name improves energy efficiency. We measure the energy consumption for batch size 3000 for the OGBN-Products dataset over 10 epochs across three training machines and average it in Table~\ref{tab:energy_metrics_cpu_gpu_first}.

\begin{table}[htbp]
  \centering
  \setlength{\tabcolsep}{0pt}
  \caption{Detailed energy and performance metrics for CPU and GPU components of RapidGNN and DGL-METIS.}
  \label{tab:energy_metrics_cpu_gpu_first}
  \begin{tabular*}{\columnwidth}{@{\extracolsep{\fill}} lcccc}
    \toprule
    \multirow{2}{*}{Metric} 
      & \multicolumn{2}{c}{CPU} 
      & \multicolumn{2}{c}{GPU} \\
    \cmidrule(lr){2-3}\cmidrule(lr){4-5}
      & RapidGNN & DGLM & RapidGNN & DGLM \\
    \midrule
    Total Energy (J)        & 1376.16 & 2464.64 & 2309.52 & 3400.74 \\
    Mean Energy/Epoch (J)   & 137.62  & 246.46  & 230.95  & 340.07  \\
    Min Energy/Epoch (J)    & 117.79  & 234.64  & 219.32  & 332.30  \\
    Max Energy/Epoch (J)    & 164.26  & 258.62  & 260.52  & 361.06  \\
    Mean Power (W)          & 36.73   & 42.70   & 30.84   & 29.45   \\
    Total Duration (s)      & 37.47   & 57.74   & 37.47   & 57.74   \\
    Mean Duration/Epoch (s) & 3.75    & 5.77    & 3.75    & 5.77    \\
    \bottomrule
  \end{tabular*}
\end{table}

Though the implementation of the feature cache in the GPU increases the memory usage, it remains stable throughout the training and even with an increasing number of machines (as validated in \ref{subsec:scale}). Therefore, on the GPU, even though RapidGNN uses slightly more GPU power compared to DGL-METIS (only \textbf{4.7\%} more), it consumes about \textbf{one-third} less total energy (2309 J vs. 3401 J). 

\begin{figure}[!htbp]
    \centering
    \includegraphics[width=0.48\textwidth]{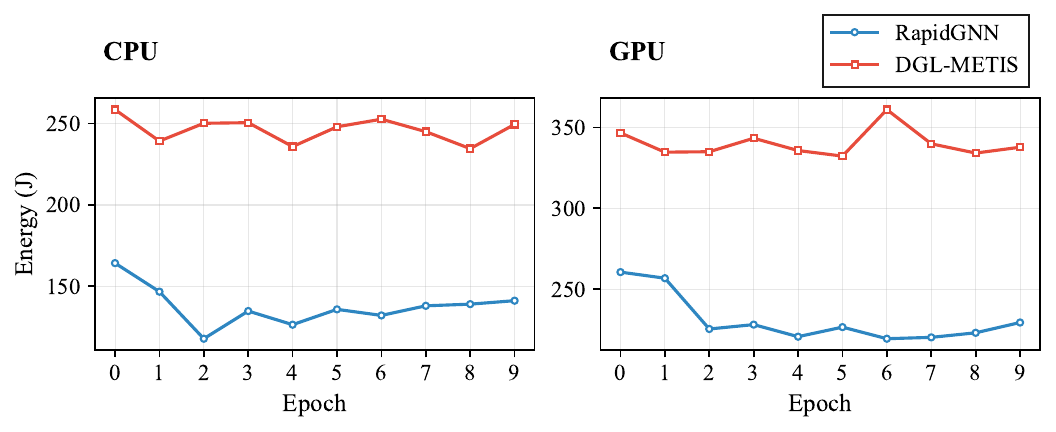}
    \caption{Comparison of CPU and GPU energy consumption in RapidGNN vs DGL-METIS}
    \label{fig:energy}
\end{figure}

On the CPU, RapidGNN reduces total energy by nearly \textbf{half} (1376 J vs. 2465 J) mainly because it completes training ~35\% faster (37.5 s vs. 57.7 s). However, in contrast to GPU, which does the same computation operation on top of hosting features for cache,  its mean CPU power draw is lower (36.73 W vs. 42.70 W, about \textbf{14\%} less), which means it saves energy both by drawing less power and by running for a shorter duration. This leads to proportionally lower per-epoch energy, with the minimum and maximum CPU energy per epoch also substantially lower than DGL-METIS. This is due to the CPU not spending intermittent repetitive work on constructing batches on the fly in between GPU workloads, along with redundant marshalling of data, handling network I/O, and context‐switching between communication and computation as a direct benefit of RapidGNN.

\subsection{Convergence Evidence}
To verify given Proposition \ref{prop:seeded-sgd}, we compare epoch‐wise training accuracy of \name against the baselines in Figure~\ref{fig:accuracy_comparison}.
\label{subsec:convergence}
\begin{figure}[!htbp]
  \centering

  \includegraphics[width=0.45\textwidth]{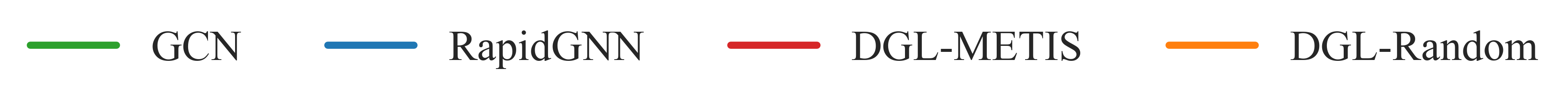}

  \begin{subfigure}[b]{0.48\columnwidth}
    \includegraphics[width=\linewidth]{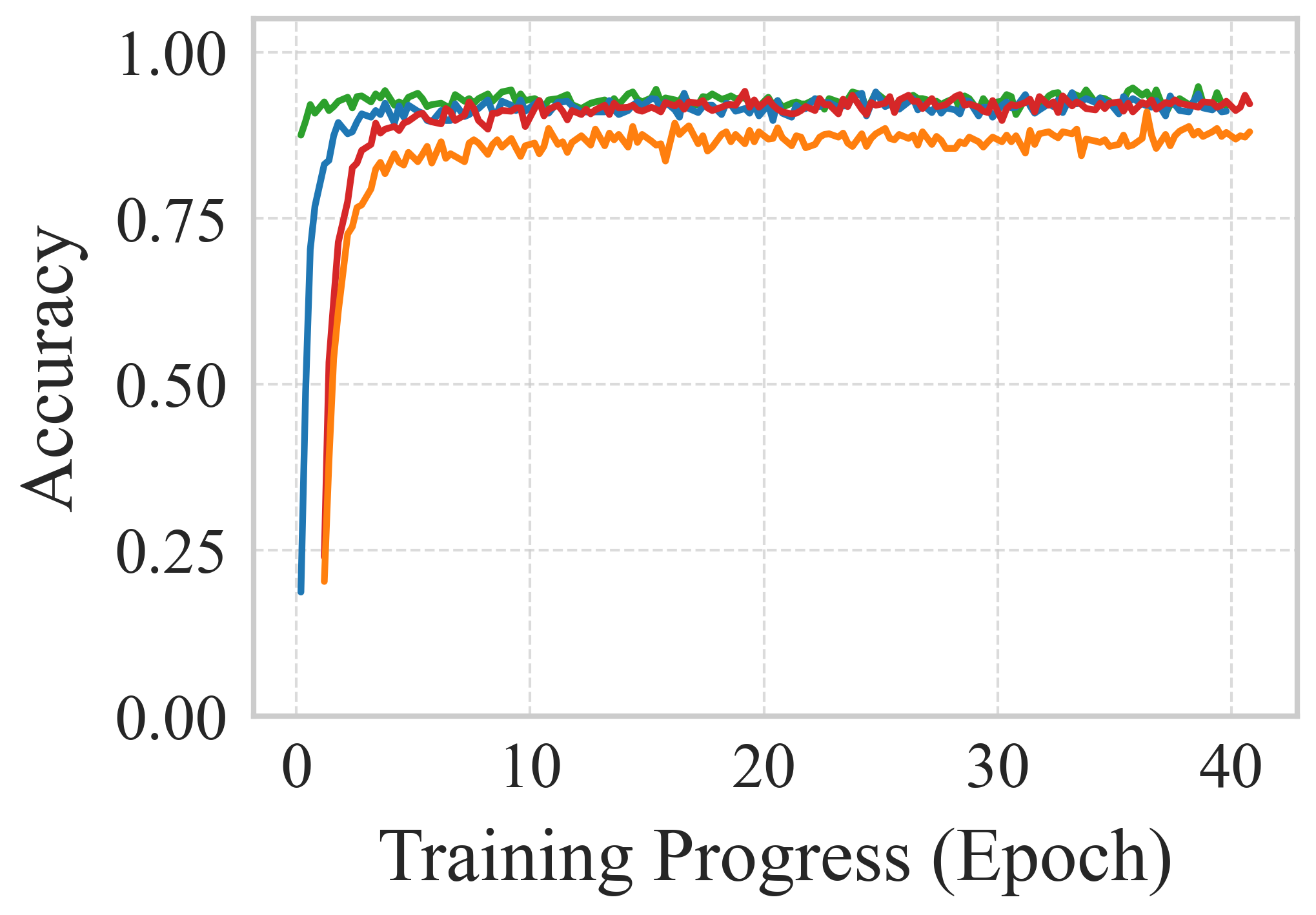}
    \caption{OGBN‑Products, batch size 1000}
    \label{fig:acc_ogbn_1000}
  \end{subfigure}\hfill
  \begin{subfigure}[b]{0.48\columnwidth}
    \includegraphics[width=\linewidth]{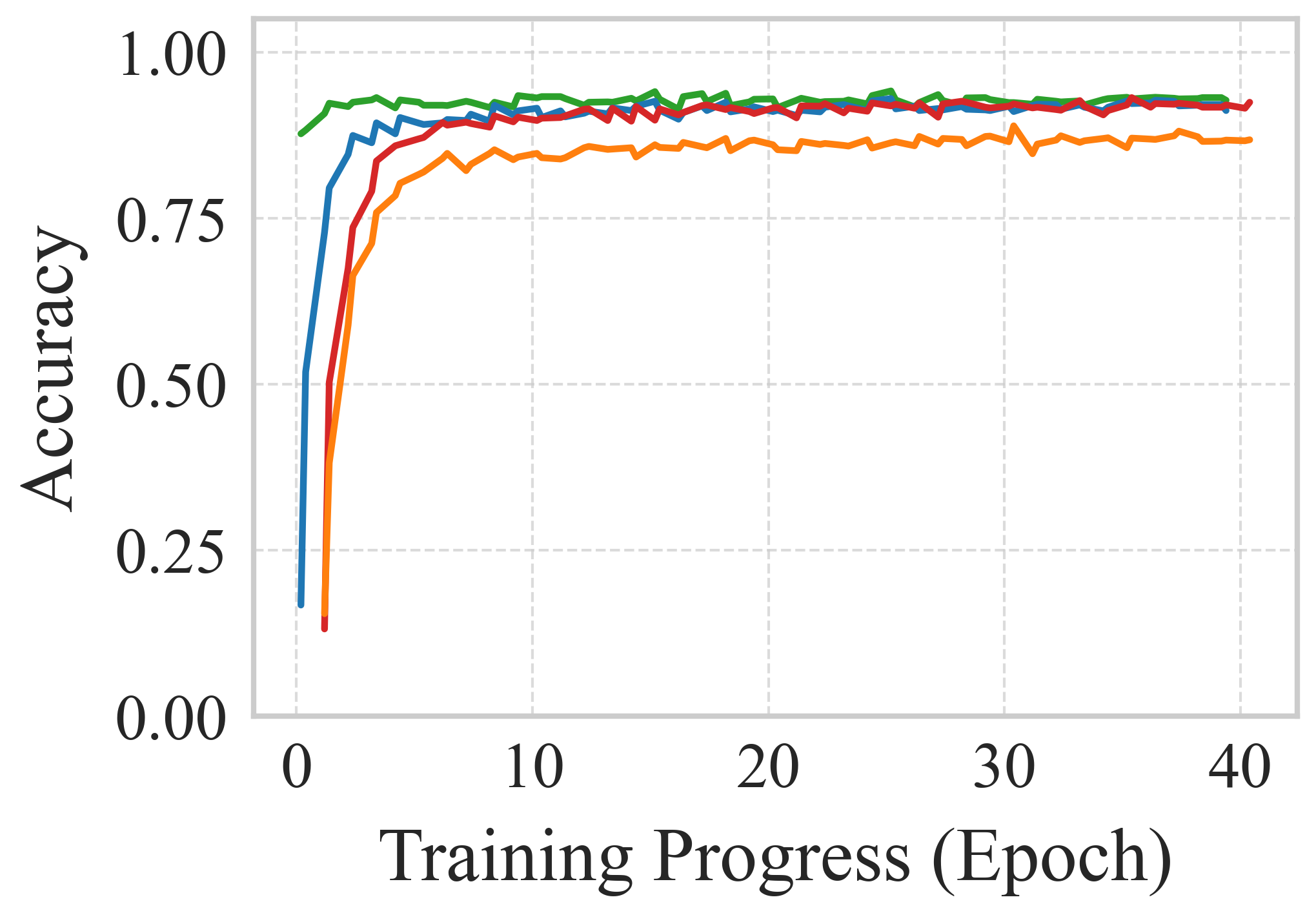}
    \caption{OGBN‑Products, batch size 2000}
    \label{fig:acc_ogbn_2000}
  \end{subfigure}

  \vspace{1mm}

  \begin{subfigure}[b]{0.48\columnwidth}
    \includegraphics[width=\linewidth]{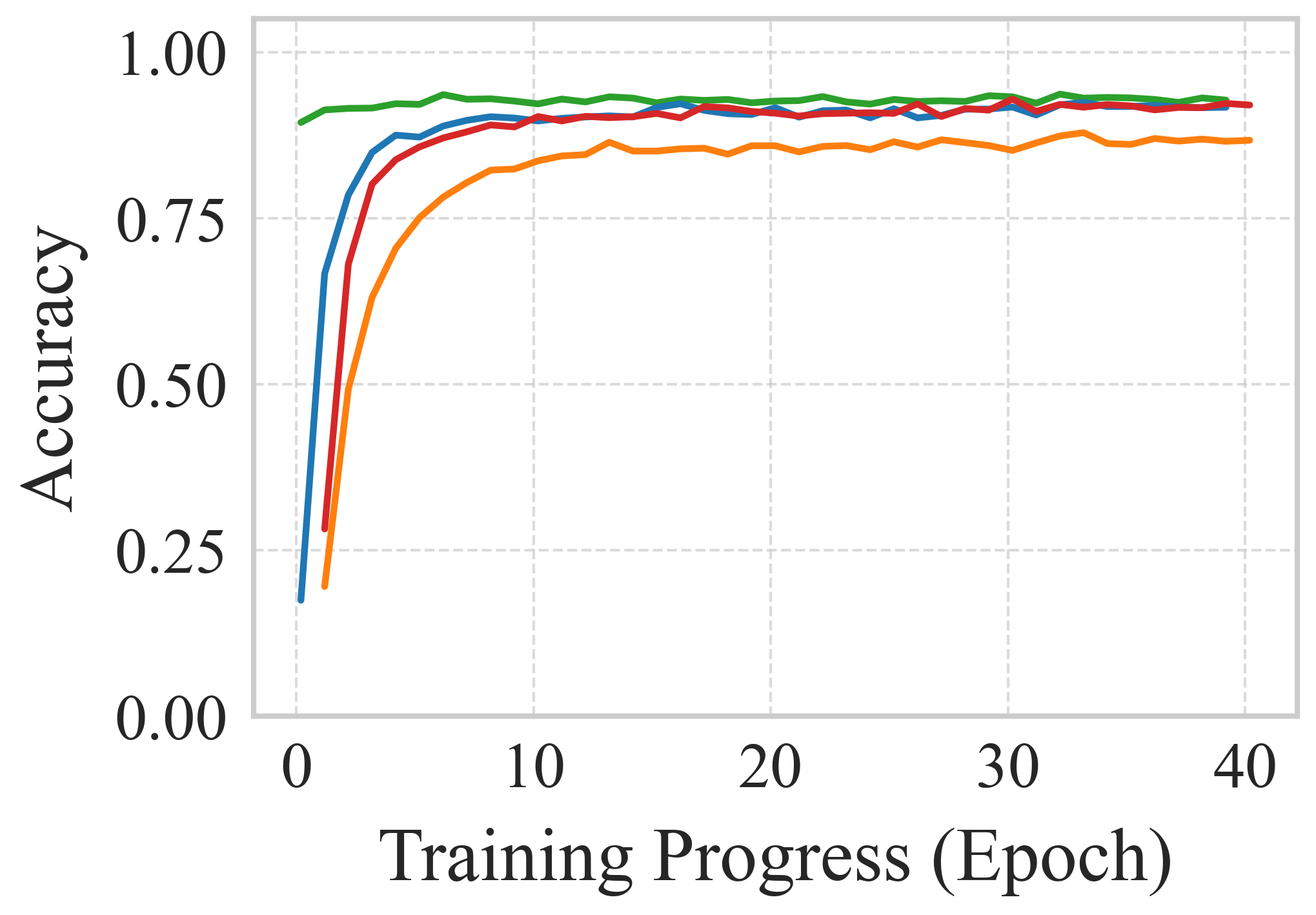}
    \caption{OGBN‑Products, batch size 3000}
    \label{fig:acc_ogbn_3000}
  \end{subfigure}\hfill
  \begin{subfigure}[b]{0.48\columnwidth}
    \includegraphics[width=\linewidth]{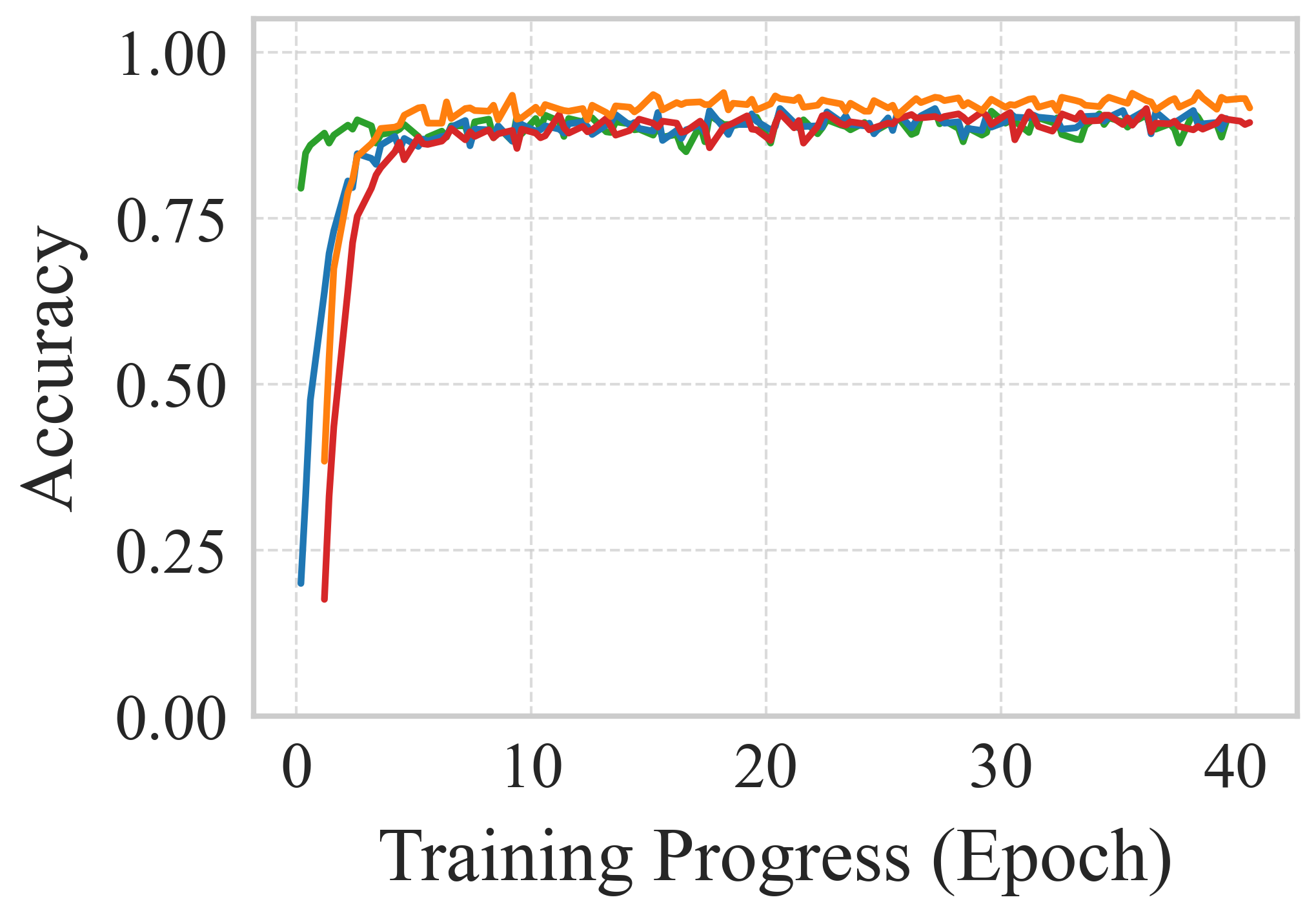}
    \caption{Reddit, batch size 1000}
    \label{fig:acc_reddit_1000}
  \end{subfigure}

  \vspace{1mm}

  \begin{subfigure}[b]{0.48\columnwidth}
    \includegraphics[width=\linewidth]{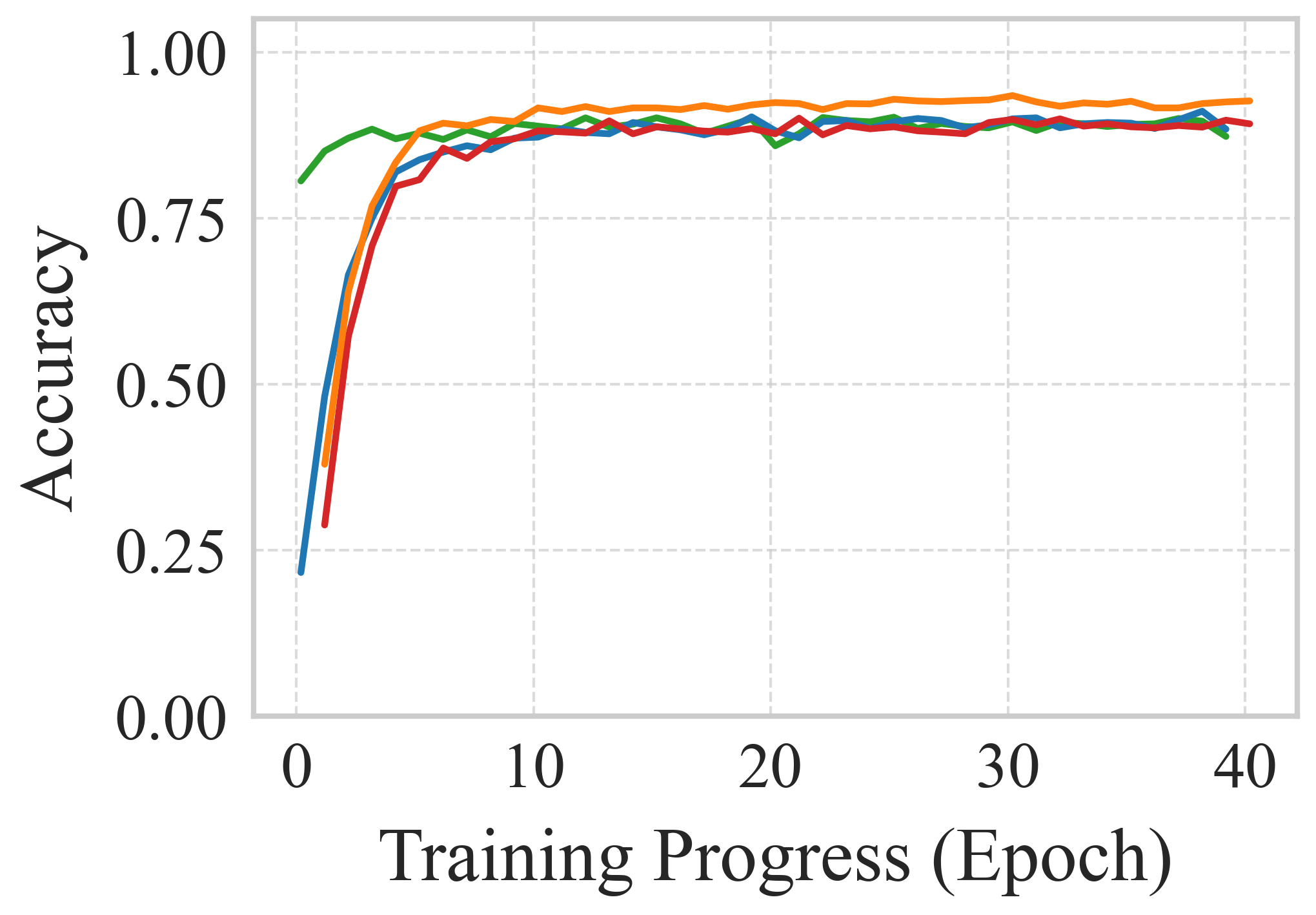}
    \caption{Reddit, batch size 2000}
    \label{fig:acc_reddit_2000}
  \end{subfigure}\hfill
  \begin{subfigure}[b]{0.48\columnwidth}
    \includegraphics[width=\linewidth]{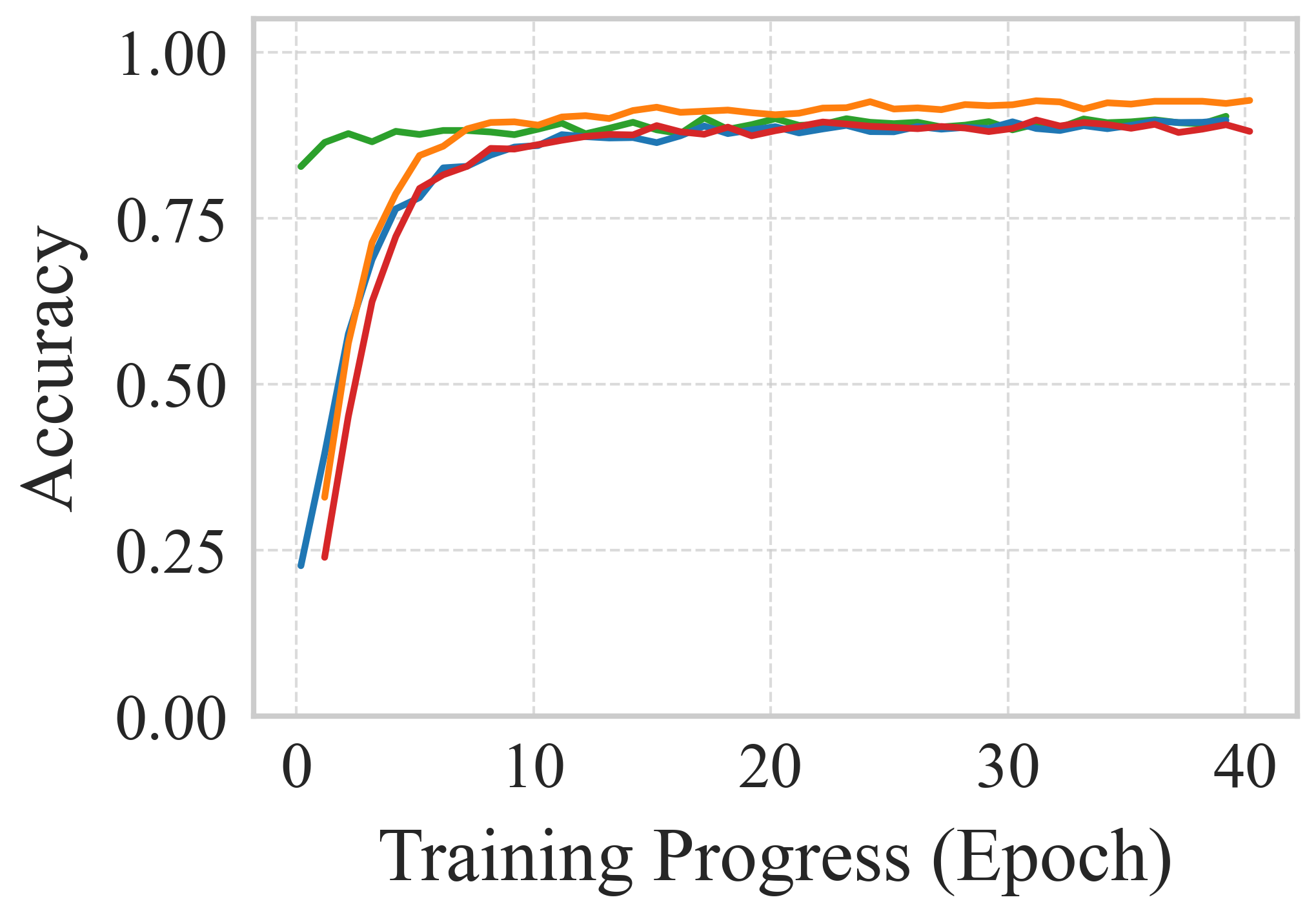}
    \caption{Reddit, batch size 3000}
    \label{fig:acc_reddit_3000}
  \end{subfigure}

  \vspace{-1.5mm} 
  \caption{Training accuracy across batch sizes on OGBN‑Products (top three) and Reddit (bottom three).}
  \label{fig:accuracy_comparison}
  \vspace{-4mm}
\end{figure}

In all six configurations, \name's accuracy curves rapidly rise and plateau at the same level as the baselines. We observe no signs of slowed convergence or increased variance due to deterministic sampling or cache‐guided prefetching. These results empirically confirm Proposition \ref{prop:seeded-sgd}: fixing the PRNG seed and employing a hot‐node cache do not bias or destabilize the stochastic gradient estimates, preserving the convergence guarantees of standard mini-batch SGD.

\section{Conclusion}

We present \name, an access pattern-based cache optimization method and prefetching technique for distributed GNN training. It significantly improves communication overhead, energy efficiency, and training time without compromising the model's convergence rate by actively reducing communication and reusing features. Our implementation requires minimal changes within the distributed DGL framework and uses existing modules to build the \name architecture while achieving significant improvements. On three respective benchmark graphs, we demonstrate reduction in overall training time along with reduction of data transferred over the network. We also demonstrate a significant reduction in total GPU and CPU energy consumption and CPU power usage. In the future, we aim to extend this architecture to other GNN architectures and dynamic graph workloads, as our method does not require any modification to existing architectures. We also plan to analyze the performance and energy consumption trade-offs further and design predictive system-level optimizations to increase communication efficiency with a minimum energy footprint.

\bibliographystyle{ACM-Reference-Format}
\bibliography{references} 
\end{document}